\documentclass{article} 
\usepackage{iclr2025_conference,times}


\usepackage{amsmath,amsfonts,bm}









\def\eqref#1{equation~\ref{#1}}









\def\1{\bm{1}}










\DeclareMathAlphabet{\mathsfit}{\encodingdefault}{\sfdefault}{m}{sl}
\SetMathAlphabet{\mathsfit}{bold}{\encodingdefault}{\sfdefault}{bx}{n}













\usepackage{hyperref}
\usepackage{url}
\usepackage{graphicx}
\usepackage{subcaption}
\usepackage{amsthm}

\newtheorem{theorem}{Theorem}

\title{How Learnable Grids Recover Fine Detail in Low Dimensions: A Neural Tangent Kernel Analysis of Multigrid Parametric Encodings}


\author{Samuel Audia, Soheil Feizi, Matthias Zwicker \& Dinesh Manocha \\
University of Maryland, College Park\\
\texttt{\{sjaudia,sfeizi,zwicker,dmanocha\}@umd.edu} \\
}

%

\iclrfinalcopy 
\begin{document}

\maketitle

\begin{abstract}

Neural networks that map between low dimensional spaces are ubiquitous in
computer graphics and scientific computing; however, in their naive
implementation, they are unable to learn high frequency information. We present
a comprehensive analysis comparing the two most common techniques for mitigating
this spectral bias: Fourier feature encodings (FFE) and multigrid parametric
encodings (MPE). FFEs are seen as the standard for low dimensional mappings, but
MPEs often outperform them and learn representations with higher resolution and
finer detail. FFE's roots in the Fourier transform, make it susceptible to
aliasing if pushed too far, while MPEs, which use a learned grid structure, have
no such limitation. To understand the difference in performance, we use the
neural tangent kernel (NTK) to evaluate these encodings through the lens of an
analogous kernel regression. By finding a lower bound on the smallest eigenvalue
of the NTK, we prove that MPEs improve a network's performance through the
structure of their grid and not their learnable embedding. This mechanism is
fundamentally different from FFEs, which rely solely on their embedding space to
improve performance. Results are empirically validated on a 2D image regression
task using images taken from 100 synonym sets of ImageNet and 3D implicit
surface regression on objects from the Stanford graphics dataset. Using peak
signal-to-noise ratio (PSNR) and multiscale structural similarity (MS-SSIM) to
evaluate how well fine details are learned, we show that the MPE increases the
minimum eigenvalue by 8 orders of magnitude over the baseline and 2 orders of
magnitude over the FFE. The increase in spectrum corresponds to a 15 dB (PSNR) /
0.65 (MS-SSIM) increase over baseline and a 12 dB (PSNR) / 0.33 (MS-SSIM) increase over the
FFE.

\end{abstract}

\section{Introduction}

Recent advancements in computer graphics, scientific machine learning, and the broader class of implicit neural representations \citep{Essakine2024WhereSurvey} rely on
the simple coordinate-based multi-layer perception (MLP) network, which maps
between low dimensional spaces (typically $\mathbb{R}^1$, $\mathbb{R}^2$, or
$\mathbb{R}^3$). Despite their diminutive size, coordinate-based MLPs have
empowered inverse rendering \citep{Barron2021Mip-NeRF:Fields,
Mildenhall2020NeRF:Synthesis, Muller2022InstantEncoding,
Pumarola2020D-NeRF:Scenes, Yu2020PixelNeRF:Images, Zhang2020NeRF++:Fields},
implicit surface regression \citep{Wang2021SplineFields}, solving the rendering
equation \citep{Hadadan2021NeuralRadiosity}, and enabled Physics-Informed Neural
Networks \citep{Raissi2017PhysicsEquations} in the growing field of scientific
machine learning. These networks need to have a small memory and computational
footprint to maintain the tight timing required in computer
graphics \citep{Muller2021Real-timeTracing} and to allow for computing higher
order derivatives in scientific machine learning. Though simple MLP networks
meet these requirements, in their naive implementation, coordinate-based MLPs
suffer from what is known as the spectral bias problem
\citep{Basri2020FrequencyDensity, Basri2019TheFrequencies,
Wang2020WhenPerspective}, in which they learn higher frequency details orders of
magnitude slower than low frequency details. To understand the cause of this
bias, machine learning practitioners turn to the neural tangent kernel (NTK).

Introduced by \citet{Jacot2018NeuralNetworks}, the neural tangent
kernel describes the training dynamics of a neural network in its infinite width
limit. Though using the NTK directly has been shown to produce worse results
than its finite width counterpart \citep{Li2020LearningNTK}, empirical evaluation of the kernel
for a given data set and architecture allows practitioners to apply classical
techniques, such as eigenvector decomposition, to better understand the
underlying network. In fact, the spectrum of the NTK can be used to show a
network's ability to learn high frequency information in finite width networks
\citep{Tancik2020FourierDomains, Wang2020WhenPerspective, Yang2019ANetworks}. The eigenvectors for
these finer details correspond to the lowest eigenvalues \citep{Basri2019TheFrequencies}, so if network
adaptations are able to raise the spectrum as a whole, the network is better
able to represent the underlying function. For coordinate based MLPs, this
adaption is frequently an encoding that maps the input into a higher dimensional
latent space before being passed to the network.

The two most popular encodings are parametric encodings
\citep{Hadadan2021NeuralRadiosity, Muller2022InstantEncoding} and Fourier
feature encodings (FFE) \citep{Mildenhall2020NeRF:Synthesis,
Tancik2020FourierDomains, Vaswani2017AttentionNeed, Wang2020WhenPerspective}.
Parametric encodings use auxiliary data structures with learnable parameters to
build a higher dimensional embedding space. These data structures frequently
take the form of a grid, with samples being interpolated from fixed grid points
and concatenated. Fourier feature encodings, in contrast, contain no learnable
features and instead use a series of sines and cosines, similar to that of a
Fourier transform, to embed the input into a high dimensional unit hypersphere.
Its ease of implementation and amenability to analysis has made the Fourier
feature encoding the dominant choice in scientific machine learning
\citep{Cuomo2022ScientificNext, Lu2019DeepXDE:Equations,
Raissi2017PhysicsEquations}; however, in computer graphics, parametric encodings
have shown orders of magnitude better performance than Fourier feature encodings
\citep{Hadadan2021NeuralRadiosity, Muller2022InstantEncoding} at the cost of a
larger memory footprint. Both encodings have been extensively evaluated,
ignoring their NTK spectrum, on graphics problems such as 2D image regression,
3D shape regression, inverse rendering, and radiosity calculations. See
\citet{Tancik2020FourierDomains} for the FFE examples and
\citet{Muller2022InstantEncoding,Hadadan2021NeuralRadiosity} for MPE examples.

The FFEs popularity has led to multiple investigations of its effect on the NTK
spectrum \citep{Tancik2020FourierDomains, Wang2020WhenPerspective}. No such
analysis has been known previously for parametric encodings. We remedy this
fact, by providing an in depth analysis of the NTK for parametric encodings.
Specifically, we make the following contributions:

\begin{itemize} 

\item{We derive the neural tangent kernel for the MPE. Through this derivation,
we prove that the multigrid encoding raises the eigenvalue spectrum of the
neural tangent kernel as compared to the baseline coordinate based MLP by
forming a lower bound on its eigenvalues. Our results provide the first
theoretical justification for why MPEs are able to learn finer detail and
discontinuities better than networks with no encoding.} 

\item{We isolate the superior performance of MPEs to their learnable grid and
not their embedding space by evaluating the NTK with and without the
contributions from the grid. This is fundamentally different from FFEs, which
rely solely on the embedding.}

\item{We empirically evaluate the NTK spectrum for the multigrid parametric
encoding, Fourier feature encoding, and baseline identity encoding for 2D image
regression on 100 synonym sets in ImageNet \citep{Deng2009ImageNet:Database} and
3D implicit surface regression \cite{Mescheder2018OccupancySpace} on three meshes
from the Stanford graphics dataset \cite{Turk1994ZipperedImages} to validate our
proof. Peak signal-to-noise ratio (PSNR) and multi-scale structural similarity
index measure (MS-SSIM) \cite{Wang2003Multi-scaleAssessment} measure how well
the learned images learns fine details in the input. The MPE with the smallest
grid cells was shown to increase the minimum eigenvalue 2 orders of magnitude
over the highest frequency FFE and 8 orders of magnitude over the baseline MLP.
This corresponds to an increase of 12 dB (PSNR) / 0.33 (MS-SSIM) and 15 dB
(PSNR) / 0.65 (MS-SSIM), respectively.}

\end{itemize}

\section{Related Work}

Encodings embed a low dimensional input into a higher dimensional space before
passing it to the network. Much like the kernel trick for support vector
machines \citep{10.5555/299094}, the hope is that this new space is then easier
for the network to act on. A key example of this fact is found in the
transformer architecture \citep{Vaswani2017AttentionNeed}. Transformers augment
their input with a positional encoding which gives the attention block
information about an input's location in a larger sequence. The positional
encoding updates inputs with a series of sines and cosines, providing an early
example of the FFE. NeRF \citep{Mildenhall2020NeRF:Synthesis} used the positional
encoding as inspiration to help a coordinate based MLP solve the inverse
rendering problem, in which a scene is learned from images. Though NeRF has been
iterated on, improved, and extended \citep{Barron2021Mip-NeRF:Fields,
Pumarola2020D-NeRF:Scenes, Zhang2020NeRF++:Fields}, the FFE encoding has been a
mainstay. The two most common FFEs are the axis-aligned logarithmic, as explored
in this paper, and the Gaussian FFE \citep{Rahimi2008RandomMachines,
Tancik2020FourierDomains}, which introduces randomness into the encoding.

Multigrid encodings, in contrast, evolved from learned grid
\citep{Fridovich-Keil2021Plenoxels:Networks} and voxel
\citep{Chibane2020ImplicitCompletion} representations. Grid parameters, such as
spherical harmonic coefficients, are learned through backpropagation and
gradient descent. These representations, though intuitive, often resulted in
large memory usage when capturing fine detail. So, the grid resolution was
reduced, and intermediate results were passed to an MLP network to bridge the
gap between the grid and finer details. This idea lead to the creation of the
multigrid parametric encoding \citep{Hadadan2021NeuralRadiosity} and the hash
grid encoding \citep{Muller2022InstantEncoding}. The encodings are very similar,
with the hash grid encoding being optimized for fast access on a graphics
processing unit. Empirically, these encodings have outperformed FFEs in many
graphics applications, such as 2D image regression and implicit surface
regression. Though parametric encodings have predominantly been applied to
graphics applications, the scientific machine learning community has started to
take notice and the hash grid encoding has been used to solve Burgers and the
Navier-Stokes equations \citep{Huang2023EfficientEncoding}.

The strong empirical performance of both types of encoding has sparked an
interest in understanding how they work. Some authors considered empirical
metrics such as gradient confusion and activation regions
\citep{Lazzari2023UnderstandingDynamics}, while others have investigated the
encodings through the lens of the neural tangent kernel
\citep{Tancik2020FourierDomains}. Though gradient confusion and activation
regions provide interesting insight, activation regions are restricted to ReLU
activation functions, and gradient confusion is heavily dependent on the
stochastic sampling in training. Therefore, we focus our analysis on the NTK. In
the infinite width limit, any common machine learning architecture can be
represented by a kernel regression problem using the NTK
\citep{Yang2019ANetworks}. Then, given a small enough learning rate, the
training dynamics are equivalent to solving a simple ordinary differential
equation \citep{Jacot2018NeuralNetworks}. This representation shows that
features along the largest eigenvalues will converge faster
\citep{Basri2020FrequencyDensity, Basri2019TheFrequencies,
Jacot2018NeuralNetworks, Tancik2020FourierDomains, Wang2020WhenPerspective}.
Though this kernel does not always exist in closed form, we can empirically
evaluate the finite width parallel using automatic differentiation
\citep{Novak2022FastKernel}. Analogous to a first order Taylor expansion of the
weights, the finite width kernel provides valuable NTK insights for the network
sizes used in practice. Previous analysis of the NTK for coordinate based MLPs
has been restricted to FFEs. We seek to include MPEs in this extensive
literature, and show that the eigenvalue spectrum can be used to explain MPE's
improved performance over FFEs and the baseline MLP network.

\section{Background and Notation}

In this section, we briefly explain the necessary background to understand our
key results. After an overview of our notation, we describe the structure and
mathematical formulation of the two most popular encodings. We then give a brief
overview of the NTK and how it relates to the spectral bias problem by connecting
the eigenvalues to the convergence on their corresponding eigenvector.

\textbf{Notation.} We consider a data set, $\mathbf{X} = \{\mathbf{x}_1,
\mathbf{x}_2, \ldots, \mathbf{x}_N\}$ where $\mathbf{x}_i \in \mathbb{R}^d \; ;
\; 0 < d \leq 3, i \in \{1, \ldots, N\}$. Samples are typically drawn from
Cartesian space or the pixel space of an image. The corresponding target
$\mathbf{Y} = \{\mathbf{y}_1, \mathbf{y}_2, \ldots, \mathbf{y}_N\}$ is similarly
low dimensional. The coordinate based MLP is then given by the function
$f_\theta(\mathbf{x}) : \mathbb{R}^{d_e} \rightarrow \mathbb{R}^{1, 2, \text{ or
} 3}$ parameterized by the learnable weights $\theta$, where $d_e$ is the
dimension of the embedding space. Matrices are denoted by bold capital letters,
while vectors are bold lower case letters. Encodings are denoted by
$\gamma(\mathbf{x}) : \mathbb{R}^d \rightarrow \mathbb{R}^{d_e}$, and the
overall network is given by $f_\theta \circ \gamma$.

\textbf{Axis-Aligned Logarithmic Fourier Feature Encoding.} Axis-aligned logarithmic Fourier feature encodings are implemented by passing the original network input through a series of sines and cosines, mimicking a Fourier transform. More frequencies are added by increasing a hyperparameter $L$. Making this a learnable parameter has not been shown to work well in practice \citep{Tancik2020FourierDomains}. $L$ needs to be balanced with the frequency content of the function, or else the result will begin to alias and decrease in quality \citep{Tancik2020FourierDomains}. Written out, the encoding appears as

 \begin{equation} \gamma_F(\mathbf{x}) = \left[\sin(2^0 \mathbf{x}), \cos(2^0\mathbf{x}),\sin(2^1 \mathbf{x}), \cos(2^1\mathbf{x}), \ldots, \sin(2^{L-1} \mathbf{x}), \cos(2^{L-1} \mathbf{x})  \right]. \label{eqn:ffe}\end{equation}

$\mathbf{x} \in \mathbb{R}^d$ and $\gamma_F(\mathbf{x}) \in \mathbb{R}^{d_e} = 2dL$. The logarithmic step helps the encoding shift to different frequencies due to
the cyclic nature of sines and cosines. Slight variations on this encoding exist, such
as phase parameters and coefficients being drawn from a standard normal distribution
\citep{Tancik2020FourierDomains}. 

\textbf{Axis-Aligned Multigrid Encoding.} Temporarily taking on the notation of
Euclidean coordinates, $x$ and $y$, we consider the 2D bilinear interpolation
function defined by,

\begin{equation} \tilde{g}(x, y) = \frac{1}{\Delta x \Delta y} 
\begin{bmatrix}
    x^{(n)}_2 - x & x - x^{(n)}_1 
\end{bmatrix} 
\begin{bmatrix} 
    g(x^{(n)}_1, y^{(n)}_1) & g(x^{(n)}_1, y^{(n)}_2) \\ 
    g(x^{(n)}_2, y^{(n)}_1) & g(x^{(n)}_2, y^{(n)}_2) 
\end{bmatrix} 
\begin{bmatrix}
    y^{(n)}_2 - y \\
    y - y^{(n)}_1
\end{bmatrix}.\end{equation}

Similar functions for linear and trilinear interpolation exist for 1D and 3D applications. On an evenly spaced grid, $\Delta x$ and $\Delta y$ represent the width and height of a grid cell. The coordinates $\left[x^{(n)}_1, y^{(n)}_1\right], \left[x^{(n)}_1, y^{(n)}_2\right],\left[x^{(n)}_2, y^{(n)}_2\right], \text{and} \left[x^{(n)}_2, y^{(n)}_1\right]$ represent the four corners of the $n^{th}$ cell, starting in the top left corner and moving clockwise. $g(x, y)$ is then any arbitrary function that we are interpolating. 

We define a series of grids of decreasing resolution, either linearly or logarithmically, from which we interpolate the input at each coordinate. Each point in the grid contains one or more learnable scalar parameters, typically initialized by $\mathcal{N}(0, 0.01)$ \citep{Hadadan2021NeuralRadiosity}. The interpolation at each resolution is then concatenated \citep{Muller2022InstantEncoding} along with the original input before being passed to the network. The encoding is shown graphically in Figure \ref{fig:multigrid-diagram} and is given mathematically by

\begin{equation} \gamma_{M,\phi}(\mathbf{x}) = \tilde{g}_\phi^{(0, 0)}(\mathbf{x})\oplus \ldots\oplus \tilde{g}_\phi^{(0, k)}(\mathbf{x})\oplus \ldots\oplus \tilde{g}_\phi^{(L, k)}(\mathbf{x})\oplus \mathbf{x}. \label{eqn:pe}\end{equation}

$k$ is the number of the learnable, scalar values at a given grid point, and a
typical value for $k$ is in the single digits. Similarly, $L$ is the index of
the grid. In practice, only a handful of layers are needed for good results.
Similar encodings exist, such as the sparse multigrid encoding
\citep{Hadadan2021NeuralRadiosity} and the hash grid encoding
\citep{Muller2022InstantEncoding}. Both variations work similarly and decrease
the memory requirements of the encoding without changing their theoretical
properties.

\begin{figure}[t]
    \centering
    \includegraphics[width=0.70\linewidth]{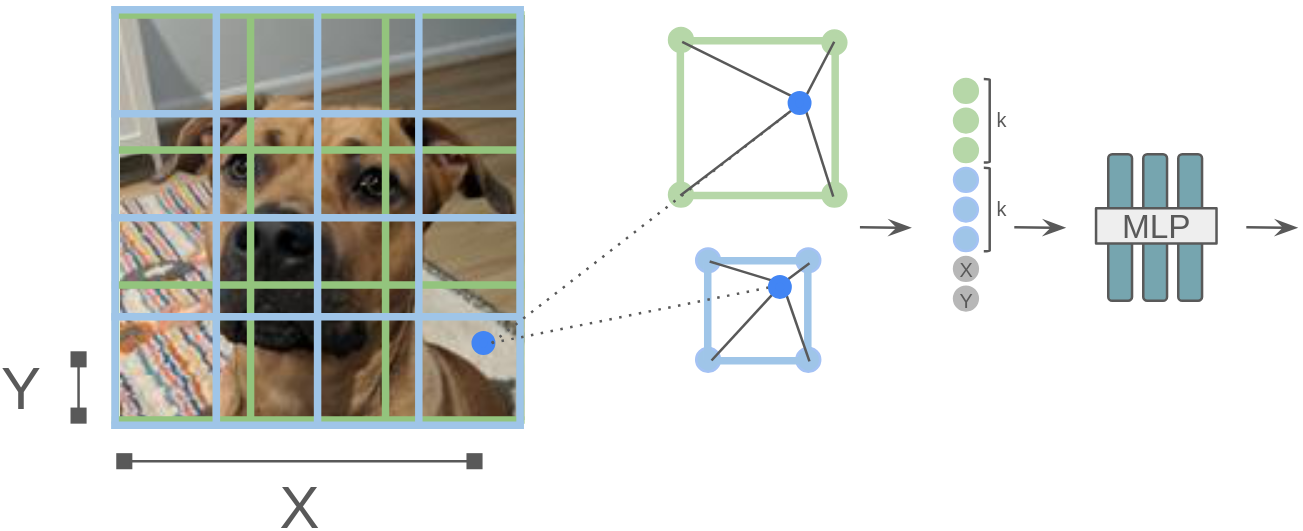}
    \caption{The above figure shows an example of the multigrid parametric
    encoding (MPE). The sample location (blue dot) is mapped to the surrounding
    grid cells (blue and green squares). The grid contains $k$ learnable scalars
    at each intersection point. Bilinear interpolation is performed on these
    learnable parameters independently. All learnable parameters are then
    concatenated with the origin x and y coordinate before being
    passed to the network.}
    \label{fig:multigrid-diagram}
\end{figure}

\textbf{Neural Tangent Kernel.} Recent work has made great strides in the
explainability of neural networks using what is known as the neural tangent
kernel. \citet{Jacot2018NeuralNetworks} have shown that as
the width of the network tends towards infinity and the learning rate of
stochastic gradient descent goes to zero, the training can be described as
kernel regression with the NTK. This result can be thought of intuitively as
the Taylor expansion about the optimal weights. The NTK then represents the first
order derivative in the expansion. This kernel is given by

\begin{equation} \mathbf{K}_{NTK}(\mathbf{x}_i, \mathbf{x}_j) = \mathbb{E}_{\theta \sim \mathcal{N}} \Big\langle \frac{\partial f_\theta(\mathbf{x}_i)}{\partial \theta}, \frac{\partial f_\theta(\mathbf{x}_j)}{\partial \theta} \Big\rangle. \end{equation}

The kernel is formed over all pairwise comparisons within the training data set.
This produces a positive semi-definite Gram matrix. With suitable initialization
and infinite width, this kernel becomes deterministic and remains constant
during training \citep{Jacot2018NeuralNetworks}. Assuming $K_{NTK}$ is
invertible, the network's predictions on a new set of data at training step $t$
is given by \citep{Wolberg2006KernelRegression}

\begin{equation} f_\theta(\mathbf{X}_{test}, t) \approx \mathbf{K}_{test}\mathbf{K}_{NTK}^{-1}(\mathbf{I} - e^{-\mathbf{K}_{NTK}t})\mathbf{Y}. \end{equation} 

$\mathbf{K}_{test}$ is given by the neural tangent kernel of the pairwise inner products of the training and test data set, and $\mathbf{Y}$ is the concatenated ground truth training data set as described in the notation section. Though originally formulated in the infinite width setting, the finite width evaluation of the NTK has been a reliable performance predictor of small and medium sized neural networks \citep{Tancik2020FourierDomains}.

\textbf{Spectral Bias.} The NTK allows a more classical understanding of the
network. $\mathbf{K}_{NTK}$ is positive semi-definite. Therefore, it can be
decomposed into an orthogonal matrix and a diagonal matrix containing the
eigenvalues $\lambda$ such that $\mathbf{K}_{NTK} = \mathbf{Q}^T
\mathbf{\Lambda} \mathbf{Q}$. Positive semi-definiteness means
that all $\lambda$'s are greater than or equal to $0$. We compare the
predictions of the training data by evaluating $f_\theta(\mathbf{X}, t) -
\mathbf{Y}$. Multiplying the NTK and its inverse produces the identity matrix,
giving 

\begin{equation} f_\theta(\mathbf{X}, t) - \mathbf{Y} = (\mathbf{I} - e^{-\mathbf{K}_{NTK}t})\mathbf{Y} - \mathbf{Y}.\end{equation}

Using the fact that $e^{-\mathbf{K}_{NTK}t} = \mathbf{Q}^Te^{-\mathbf{\Lambda} t}\mathbf{Q}$, our training loss simplifies to

\begin{equation} 
|\mathbf{Q}(f_\theta(\mathbf{X}, t) - \mathbf{Y})| = |-e^{-\mathbf{\Lambda}t}\mathbf{Q}\mathbf{Y}|. 
    \label{eqn:spectral-bias-eigenvalues}
\end{equation}

From Equation \ref{eqn:spectral-bias-eigenvalues} we see that the loss has a
dependence on the eigenvalues and eigenvectors of the NTK and that larger eigenvalues will
decrease the training error faster along that dimension. 
\citet{Basri2019TheFrequencies} found that eigenvalues corresponding to higher frequency relationships in the dataset are lower for coordinate-based MLPs. As a result, networks will fit lower frequency, smooth, relationships much faster than they will the higher frequencies. By comparing the
eigenvalue spectrum produced by the composition of encodings and the NTK, we can
evaluate an encodings' bias towards higher frequency features, which corresponds
to an increased number of large eigenvalues.

\section{Neural Tangent Kernels for Multigrid Parametric Encodings}

We explore the structure of the neural tangent kernel for different encodings
and prove that the MPE raises the eigenvalue spectrum of the kernel over the
baseline network. In special cases, the NTK can be written out in closed form
for its infinite width limit. We, however, focus on the finite width kernel to
evaluate training. By extending previous work on the NTK of MLP networks
\citep{Jacot2018NeuralNetworks, Yang2019ANetworks}, we
derive the kernel for MPEs and FFEs. We start by analyzing a single layer MLP
network, extend it to include encodings, and discuss how it is
easily extended for MLPs with multiple layers. 

Consider a single layer network without encoding and width $n$. Taking on the
parameterization scheme in \citet{Jacot2018NeuralNetworks}, we
scale the activation by the dimension $n$ and introduce a scalar, $\beta$, to
scale the bias. $\beta$ is typically set to 0.1. Given weights,
$\mathbf{W}^{(1)} \in \mathbb{R}^{n\times d}$ and $\mathbf{W}^{(2)} \in
\mathbb{R}^{1 \times n}$, and biases, $\mathbf{b} \in \mathbb{R}^{n\times 1}$,
we consider the function

\begin{equation}
    f_\theta(\mathbf{x}) = \mathbf{W}^{(2)}\frac{1}{\sqrt{n}}\phi(\mathbf{W}^{(1)}\mathbf{x} + \beta\mathbf{b}).
\end{equation}

Viewing $\theta$ as the concatenation of all learnable parameters, such that
$\theta = \big[\mathbf{W}^{(2)} \; \mathbf{W}^{(1)} \; \mathbf{b} \big]$, we
take the derivative with respect to each parameter matrix. The $i, j$ element of
$K_{NTK}$ is then given by the inner product between $\mathbf{x}_i$ and $\mathbf{x}_j$ in the data
set. Therefore, 


\begin{multline}
    \mathbf{K}^{(i,j)}_{NTK} = \mathbb{E}_{\theta \sim \mathcal{N}}\langle \frac{\partial f_\theta(\mathbf{x}_i)}{\partial \theta},
    \frac{\partial f_\theta(\mathbf{x}_j)}{\partial \theta}\rangle = \mathbb{E}_{\theta \sim \mathcal{N}}\big[\frac{1}{n}\langle
    \phi(\mathbf{W}^{(1)}\mathbf{x}_i+\beta\mathbf{b}),\phi(\mathbf{W}^{(1)}\mathbf{x}_j+\beta\mathbf{b})
    \rangle \\
    + \frac{1}{n}\langle \mathbf{W}^{(2)}\mathbf{x}_i\frac{\partial \phi}{\partial \theta}(\mathbf{W}^{(1)}\mathbf{x}_i
    + \beta\mathbf{b}), \mathbf{W}^{(2)}\mathbf{x}_j\frac{\partial \phi}{\partial \theta}(\mathbf{W}^{(1)}\mathbf{x}_j +
    \beta\mathbf{b})\rangle \\
    + \frac{\beta^2}{n}\langle \mathbf{W}^{(2)}\frac{\partial \phi}{\partial \theta}(\mathbf{W}^{(1)}\mathbf{x}_i
    + \beta\mathbf{b}), \mathbf{W}^{(2)}\frac{\partial \phi}{\partial \theta}(\mathbf{W}^{(1)}\mathbf{x}_j +
    \beta\mathbf{b})\rangle\big].
    \label{eqn:ntk-single}
\end{multline}

We now include the multigrid parametric encoding. As gradients are computed
element wise across the training dataset, we only need to consider a single grid
cell, layer, and learnable scalar at a time. Using the notation from section 3.3, define
$\Delta x_2 = x^{(n)}_2 - x, \Delta x_1 = x - x_1^{(n)}, \Delta y_2 = y_2^{(n)}
- y$, and $\Delta y_1 = y - y_1^{(n)}$. We drop the $n$ superscript to consider
a single cell, with learnable weights $w_{11}, w_{12}, w_{21},$ and $w_{22}$.
Writing out the matrices and multiplying through, we get

\begin{equation}
    \begin{bmatrix}
        \Delta x_2 & \Delta x_1   
    \end{bmatrix}
    \begin{bmatrix}
        w_{11} & w_{12} \\ w_{21} & w_{22}
    \end{bmatrix}
    \begin{bmatrix}
        \Delta y_2 \\ \Delta y_1
    \end{bmatrix} = 
        w_{11}\Delta y_2\Delta x_2 + w_{12}\Delta y_1\Delta x_2 + w_{21}\Delta y_2\Delta x_1 + w_{22}\Delta y_1\Delta x_1.
\end{equation}

It should be apparent that the gradient with respect to the weights in the grid cell is

\begin{equation}
    \nabla_\mathbf{W} \tilde{g}(x, y) = \tilde{g}'(x, y) = 
    \frac{1}{\Delta x\Delta y}
    \begin{bmatrix}
        \Delta x_2 \\ \Delta x_1    
    \end{bmatrix}
    \begin{bmatrix}
        \Delta y_2 & \Delta y_1
    \end{bmatrix}.
\end{equation}

The new function is the composition of $f_\theta$ with the encoding,
$\gamma_\theta$. As before, we concatenate all learnable parameters and compute
the $i^{th}, j^{th}$ index of $K_{NTK}$. The gradients of the grid parameters are independent due to the concatenation, so we first compute the contribution of a single grid parameter. The full kernel is then the sum of each grid cell contribution. Let $\tilde{g}(\mathbf{x}_i)$ be the
output of the bilinear interpolation on a single grid cell, then the kernel
element is


\begin{multline}
    \mathbf{K}^{(i,j)}_{NTK_{MPE}} = \mathbb{E}_{\theta \sim \mathcal{N}}\langle \frac{\partial f_\theta(\gamma_\theta(\mathbf{x}_i))}{\partial \theta},
    \frac{\partial f_\theta(\gamma_\theta(\mathbf{x}_j))}{\partial \theta}\rangle = \\ \mathbb{E}_{\theta \sim \mathcal{N}}\big[\frac{1}{n}\langle
    \phi(\mathbf{W}^{(1)}\tilde{g}(\mathbf{x}_i)+\beta\mathbf{b}),\phi(\mathbf{W}^{(1)}\tilde{g}(\mathbf{x}_j)+\beta\mathbf{b})
    \rangle \\
    + \frac{1}{n}\langle \mathbf{W}^{(2)}\tilde{g}(\mathbf{x}_i)\frac{\partial \phi}{\partial \theta}(\mathbf{W}^{(1)}\tilde{g}(\mathbf{x}_i)
    + \beta\mathbf{b}), \mathbf{W}^{(2)}\tilde{g}(\mathbf{x}_j)\frac{\partial \phi}{\partial \theta}(\mathbf{W}^{(1)}\tilde{g}(\mathbf{x}_j) +
    \beta\mathbf{b})\rangle \\
    + \frac{\beta^2}{n}\langle \mathbf{W}^{(2)}\frac{\partial \phi}{\partial \theta}(\mathbf{W}^{(1)}\tilde{g}(\mathbf{x}_i)
    + \beta\mathbf{b}), \mathbf{W}^{(2)}\frac{\partial \phi}{\partial \theta}(\mathbf{W}^{(1)}\tilde{g}(\mathbf{x}_j) +
    \beta\mathbf{b})\rangle \\
    + \frac{1}{n}\langle \mathbf{W}^{(2)}\mathbf{W}^{(1)}\tilde{g}'(\mathbf{x}_i)\frac{\partial \phi}{\partial \theta}(\mathbf{W}^{(1)}\tilde{g}(\mathbf{x}_i)
    + \beta\mathbf{b}), \mathbf{W}^{(2)}\mathbf{W}^{(1)}\tilde{g}'(\mathbf{x}_j)\frac{\partial \phi}{\partial \theta}(\mathbf{W}^{(1)}\tilde{g}(\mathbf{x}_j) +
    \beta\mathbf{b})\rangle\big].
    \label{eqn:ntk-mpe-single}
\end{multline}

The first three terms correspond to the original MLP with the input in the
embedding space of the MPE instead of the original coordinate space. Let's
denote this with $\mathbf{K}^{i, j}_{MLP}$. The last term is a new
term induced by the parameters in the grid, which we'll label
$\mathbf{K}^{i,j}_{MPE}$. As layers along the $L$ dimension and trainable
parameters along the $k$ dimension are independent, their
kernel contribution is simply Equation \ref{eqn:ntk-mpe-single} repeated for
each layer and parameter plus the kernel evaluated on the original coordinates.
Expanding out across all elements $\mathbf{x}_i \in \mathbf{X}$, we get the
following kernel for the MPE:

\begin{equation}
    \mathbf{K}_{NTK_{MPE}} = \mathbf{K}_{NTK} + \sum_{l=1}^L\sum_{a=1}^k \mathbf{K}^{l, a}_{MLP} + 
    \sum_{l=1}^L\sum_{a=1}^k \mathbf{K}^{l, a}_{MPE}.
    \label{eqn:ntk-mpe-extended}
\end{equation}

The NTK induced by the composition of the MPE and a MLP is the sum of
the original NTK, Equation \ref{eqn:ntk-single}, without any encoding and the
NTK produced by each layer and learnable parameter, Equation
\ref{eqn:ntk-mpe-single}. Though we only discuss a single layer MLP to illustrate
the structure of the NTK, the extension to multiple layers is well established
and can be built recursively from the above definitions 
\citep{Jacot2018NeuralNetworks,Yang2019ANetworks}. Adding layers to the MLP
will simply add terms to Equation \ref{eqn:ntk-single} and Equation 
\ref{eqn:ntk-mpe-single}. Equation \ref{eqn:ntk-mpe-extended}, however,
still holds for deeper MLPs. We use this fact to prove that the eigenvalue
spectrum of the composed kernel is greater than that of the base MLP.

\begin{theorem} 
Given a dataset $\mathbf{X}$, with $n$ samples, the corresponding neural tangent
kernel for a MLP network, and the neural tangent kernel for the same dataset and
MLP composed with a MPE. The $i^{th}$ eigenvalue, sorted in descending
order, of each kernel follows $\lambda_i^{MLP} \leq \lambda_i^{MLP} + \lambda_n^{MPE} \leq \lambda_i^{MLP+MPE}; \forall i \in \{1, \ldots, n\}$. $\lambda_i^{MLP}$ are the eigenvalues
of the NTK for the network with no encoding. $\lambda_n^{MPE}$ is the smallest eigenvalue of the
matrix produced by the encoding layers shown in Equation \ref{eqn:ntk-mpe-extended}.
$\lambda_i^{MLP+MPE}$ are then the eigenvalues of the NTK for the encoded network.
\label{thm:mpe-bound}
\end{theorem}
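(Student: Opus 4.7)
The plan is to recognize Equation \ref{eqn:ntk-mpe-extended} as a sum of symmetric Gram matrices and then invoke Weyl's inequality for eigenvalues of Hermitian sums. Writing $\mathbf{K}_{NTK_{MPE}} = \mathbf{K}_{NTK} + \mathbf{K}_{enc}$, where $\mathbf{K}_{enc} = \sum_{l,a}\mathbf{K}^{l,a}_{MLP} + \sum_{l,a}\mathbf{K}^{l,a}_{MPE}$, reduces the claim to a statement comparing eigenvalues of $\mathbf{K}_{NTK}$ with those of $\mathbf{K}_{NTK} + \mathbf{K}_{enc}$. Weyl's inequality for symmetric $n\times n$ matrices $A$ and $B$ with descending eigenvalues says $\lambda_i(A+B) \geq \lambda_i(A) + \lambda_n(B)$, so applying it with $A = \mathbf{K}_{NTK}$ and $B = \mathbf{K}_{enc}$ immediately delivers $\lambda_i^{MLP+MPE} \geq \lambda_i^{MLP} + \lambda_n^{MPE}$, which is the right-hand inequality of the chain.

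For the left-hand inequality $\lambda_i^{MLP} \leq \lambda_i^{MLP} + \lambda_n^{MPE}$, I need to verify that $\lambda_n^{MPE} \geq 0$, i.e., that $\mathbf{K}_{enc}$ is positive semi-definite. First I would note that every $\mathbf{K}^{l,a}_{MLP}$ and $\mathbf{K}^{l,a}_{MPE}$ in Equation \ref{eqn:ntk-mpe-single} is, by construction, an expected inner product of the form $\mathbb{E}\langle \partial_\theta f(\mathbf{x}_i), \partial_\theta f(\mathbf{x}_j)\rangle$ for the appropriate parameter subset, i.e., the Gram matrix of a random feature map in expectation. Such Gram matrices are PSD for every realization of $\theta$, so the expectation is PSD, and the sum of PSD matrices is PSD. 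Hence $\lambda_n^{MPE} \geq 0$ and the left inequality follows trivially.

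The main steps, in order, would therefore be: (i) invoke the PSD property of each individual Gram summand in Equation \ref{eqn:ntk-mpe-extended} and conclude $\mathbf{K}_{enc} \succeq 0$; (ii) state Weyl's inequality for the sum of two symmetric matrices and apply it with $A=\mathbf{K}_{NTK}$, $B=\mathbf{K}_{enc}$; (iii) combine the two to obtain the full chain $\lambda_i^{MLP} \leq \lambda_i^{MLP} + \lambda_n^{MPE} \leq \lambda_i^{MLP+MPE}$ for every $i$.

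I do not expect a serious obstacle: the result is essentially a bookkeeping consequence of Weyl plus the PSD structure inherited from the Gram form of the NTK. The only mildly delicate point is making sure the decomposition in Equation \ref{eqn:ntk-mpe-extended} really exhibits $\mathbf{K}_{enc}$ as a sum of PSD matrices rather than as a term that could have mixed sign; this is true because the derivative with respect to disjoint blocks of parameters (the MLP weights, the grid entries at each level $l$ and channel $a$) produces orthogonal components of $\partial_\theta (f\circ \gamma)$, so cross terms vanish and each block contributes its own Gram matrix. A short remark verifying this parameter-block independence is all that is needed before Weyl closes the argument.
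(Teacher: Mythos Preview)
Your proposal is correct and follows essentially the same argument as the paper: decompose $\mathbf{K}_{NTK_{MPE}}$ via Equation~\ref{eqn:ntk-mpe-extended} as $\mathbf{K}_{NTK}+\mathbf{K}^{+}$, observe that each summand is a Gram matrix and hence PSD, and then apply Weyl's inequality to obtain the chain of inequalities. Your additional remark about parameter-block independence (to rule out cross terms) is a useful clarification that the paper leaves implicit.
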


\begin{proof} Let $\mathbf{K}_{MLP}$ by the neural tangent kernel for a MLP
network evaluated on a training dataset $\mathbf{X}^n$. Let $\mathbf{K}_{MPE}$
be the neural tangent kernel for the composed MLP and MPE evaluated on the same
dataset. From Equation \ref{eqn:ntk-mpe-extended} we see that $\mathbf{K}_{MPE}
= \mathbf{K}_{MLP} + \mathbf{K}^+$, where $\mathbf{K}^+$ is the kernel produced
by the sum over all learnable parameters in the grid. By construction, each
$\mathbf{K}$ are square, symmetric Gram matrices as they are constructed by the
inner product. Each $\mathbf{K}$ is, therefore, positive semidefinite, making
$\mathbf{K}_{MPE}$ the sum of positive semidefinite matrices. Let
$\lambda_i(\mathbf{K})$ denote the $i^{th}$ eigenvalue of the matrix
$\mathbf{K}$. It follows from Weyl's inequality
\citep{Weyl1912DasHohlraumstrahlung} that $\forall i \in \{1, \ldots, n\},
\lambda_i(\mathbf{K}_{MLP}) \leq \lambda_i(\mathbf{K}_{MLP}) +
\lambda_n(\mathbf{K}^+) \leq \lambda_i(\mathbf{K}_{MPE}).$ \end{proof}

We see that the MPE changes the kernel both through its embedding space and the
learnable parameters contained within the grid. By adding more layers in the
grid or more learnable parameters at the grid nodes, we can increase the
spectrum of the kernel at the cost of additional computation and memory usage.
It is possible that the minimum eigenvalue of the additional matrices is zero;
however, as will be demonstrated in the next section, this is far from true in
practice.

For comparison, the NTK for the FFE can be computed as
\begin{multline}
    \mathbf{K}^{(i,j)}_{NTK_{FFE}} = \mathbb{E}_{\theta \sim \mathcal{N}}\langle \frac{\partial f_\theta(\gamma_F(\mathbf{x}_i))}{\partial \theta},
    \frac{\partial f_\theta(\gamma_F(\mathbf{x}_j))}{\partial \theta}\rangle =  \\
    \mathbb{E}_{\theta \sim \mathcal{N}}\big[\frac{1}{n}\langle
    \phi(\mathbf{W}^{(1)}\gamma_F(\mathbf{x}_i)+\beta\mathbf{b}),\phi(\mathbf{W}^{(1)}\gamma_F(\mathbf{x}_j)+\beta\mathbf{b})
    \rangle \\
    + \frac{1}{n}\langle \mathbf{W}^{(2)}\gamma_F(\mathbf{x}_i)\frac{\partial \phi}{\partial \theta}(\mathbf{W}^{(1)}\gamma_F(\mathbf{x}_i)
    + \beta\mathbf{b}), \mathbf{W}^{(2)}\gamma_F(\mathbf{x}_j)\frac{\partial \phi}{\partial \theta}(\mathbf{W}^{(1)}\gamma_F(\mathbf{x}_j) +
    \beta\mathbf{b})\rangle \\
    + \frac{\beta^2}{n}\langle \mathbf{W}^{(2)}\frac{\partial \phi}{\partial \theta}(\mathbf{W}^{(1)}\gamma_F(\mathbf{x}_i)
    + \beta\mathbf{b}), \mathbf{W}^{(2)}\frac{\partial \phi}{\partial \theta}(\mathbf{W}^{(1)}\gamma_F(\mathbf{x}_j) +
    \beta\mathbf{b})\rangle\big].
    \label{eqn:ntk-ffe-expanded}
\end{multline}

Equation \ref{eqn:ntk-ffe-expanded} shows that the FFE improves the NTK solely
through its embedding space, and Theorem \ref{thm:mpe-bound} does not apply. The
MPE, in contrast, has two mechanism to influence the kernel, but which is
dominant? To isolate the improvements in the MPE, we compute the spectrum both
with and without the $\mathbf{K}_{MPE}$ term (Figure
\ref{fig:isolate_kplus}). Without the contributions of the learnable grid,
the MPE has little to no effect on the eigenvalues as compared to baseline. We
conclude that the MPE derives its performance from the learnable parameters and
not the higher dimensional embedding space, while the FFE's performance rests
solely on the embedding.

\begin{figure}[t]
    \centering
    \includegraphics[width=0.5\linewidth]{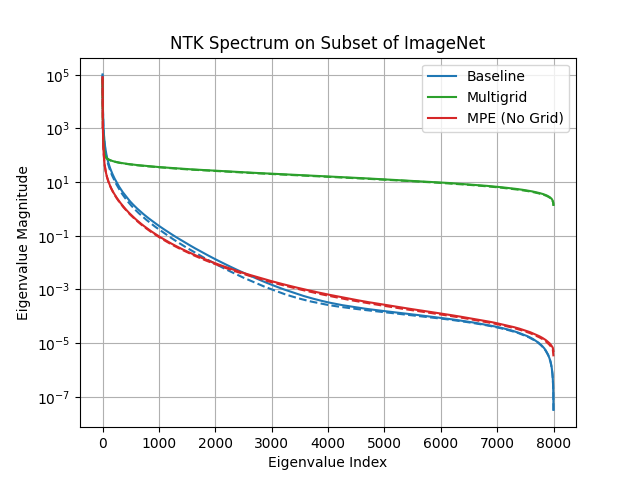}
    \caption{This plot isolates the improvements of the MPE to the learnable
    grid. The NTK is computed for a 2D image regression on random images from
    100 synonym sets in ImageNet (See the \textit{Experiments and Results}
    section for more details). The solid line and dashed line are the spectrum
    at the end and the middle of training, respectively. MPE (No Grid) is the
    NTK of the MPE without the contributions of $K_{MPE}$ and is purely for
    theoretical analysis. Without the grid, the spectrum is barely above the baseline. With
    the grid, the spectrum is 8 magnitudes higher.}
    \label{fig:isolate_kplus}
\end{figure}

\section{Experiments and Results}

We present an empirical analysis of the neural tangent kernel for multiple configuration of the FFE and MPE in the context of a 2D image regression and a 3D implicit surface regression problem. Additional results can be found in Appendix A. The image regression problem provides a controlled setting that is easily interpretable, allowing us to understand how the theoretical properties of the kernel translate to empirical results, considering convergence rates, prediction accuracy, and sensitivity to hyperparameters. 3D implicit surface regression demonstrates that the theory holds in higher dimensions as well, and provides a theoretical basis for previous results by \citet{Muller2022InstantEncoding}.

\textbf{Experimental Setup.} We evaluate the NTK in three settings. The first
setting explores the relationship between the eigenvalue spectrum and the
encoding hyperparameters on a single image. The second setting extends our
results to a larger class of images by evaluating the NTK on randomly sampled
images from 100 synonym sets from ImageNet \citep{Deng2009ImageNet:Database}.
The final setting demonstrates that the theory holds when learning a 3D implicit
surface from a mesh. The network size was held constant in each domain. For
image regression, 2 hidden layers with 512 neurons each were used, while the 3D
surface regression used 8 hidden layers with 256 neurons each. The ReLU
activation function was use in intermediate layers. For more details on 2D
regression, please see Appendix A and see Appendix E for 3D implicit surface
information. The baseline corresponds to the network with no encoding, while the
FFE and MPE hyperparameters were set as follows. The scaling experiment
hyperparameters (Table \ref{tab:scaling-params}) were hand selected, while the
ImageNet hyperparameters (Table \ref{tab:tuning-results}) and 3D implicit
surface hyperparameters (Table \ref{tab:implicit-3d-params}) were selected using
Optuna \citep{Akiba2019Optuna:Framework}.

\textbf{Evaluating the NTK and Regression.} We used automatic differentiation
and the fast finite width NTK calculation \citep{Novak2022FastKernel} to evaluate
the kernel during training. The eigenvalues of the kernel were computed at the
beginning, middle, and end of training. The eigenvalues are then sorted from
high to low and plotted on a logarithmic scale. To evaluate the regressed image,
we report both peak signal-to-noise ratio (PSNR) in dB and multi-scale structural similarity index measure (MS-SSIM) \cite{Wang2003Multi-scaleAssessment}. PSNR reports the max signal power as compared to the noise (average error); however, as PSNR is an MSE based metric, it may not capture fine detail, so we also include MS-SSIM which is more sensitive to the fine feature of the image. Higher scores in both metrics corresponds to higher quality outputs from the network as compared to ground truth.

\textbf{Scaling Analysis.} Figure \ref{fig:image-results} shows the regressed
image at the end of training. Qualitatively, we can see the spectral bias and how
the encodings mitigate it. The baseline is only able to learn a blurred image.
The low frequency FFE greatly improves the results but still lacks sharpness.
This fine detail is learned more easily by the higher frequency FFEs. Both MPEs
easily learn high frequency details without greatly increasing the network's
input size. This qualitative analysis is corroborated by the PSNR and MS-SSIM values in Table 
\ref{tab:scaling-params}.

\begin{table}[h]
    \centering
    \begin{tabular}{c|ccc|cccccc}
    & \multicolumn{3}{|c|}{ImageNet} & \multicolumn{6}{c}{Scaling} \\
    Metric & Multigrid & Fourier & Baseline & Low & Mid & High & Coarse & Fine & Baseline \\
    \hline
    PSNR $\uparrow$ & 45.28 & 33.47 & 29.94 & 25 & 35 & 36.5 & 34.5 & 41.5 & 20 \\
    m-SSIM $\uparrow$ & 0.79 & 0.39 & 0.32 & 0.20 & 0.23 & 0.40 & 0.37 & 0.73 & 0.08 \\
    \end{tabular}
    \caption{This table presents image quality metrics between the regressed image and ground truth across experiments. PSNR and MS-SSIM evaluate this similarity, with the latter being more sensitive to fine detail. As expected, we see that the MPE has the highest scores, while the FFE and MPE perform well over the baseline. See Figure \ref{fig:image-results} for a visual representation of these trends.}
    \label{tab:metrics}
\end{table}

\begin{figure}[t]
    \centering
    \includegraphics[width=0.7\linewidth]{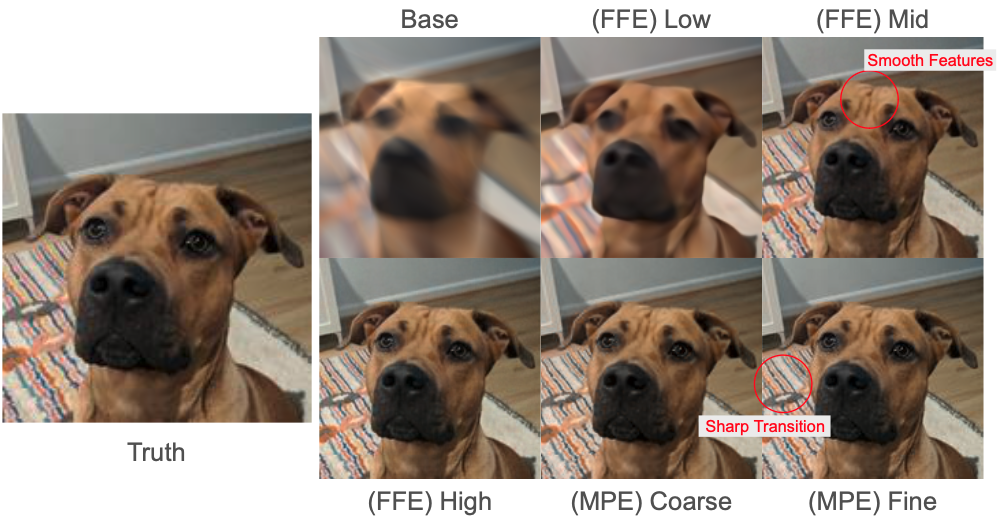}
    \caption{We compare performance of different encodings on image regression.
    We show the ground truth image (leftmost) along with a network with
    no encoding (top left), 3 configurations of the Fourier feature encoding
    (FFE), and two configurations of the multigrid parametric encoding (MPE)
    (see Table \ref{tab:scaling-params}). No encoding produces a blurred image,
    but as we add encodings, the finer details start to be resolved. An increase
    in detail is seen in the FFEs, while the MPEs perform well with both the coarse
    and the fine grid.}
    \label{fig:image-results}
\end{figure}

Figure \ref{fig:eigenvalue-2d-regression} plots the corresponding eigenvalue
spectrum for each of these regression problems shown in Figure
\ref{fig:image-results}. The eigenvalue spectrum at the middle and end of
training are plotted as the dashed and solid lines, respectively. The baseline
encoding has the worst performance, with the eigenvalues quickly dipping below
$1e-6$. The left most plot shows the comparison against all three FFEs. As
expected, increasing frequency corresponds to an increase in the eigenvalues;
however, we see that the FFE saturates, and there is little improvement between
the mid and high frequencies. The middle figure plots the same comparison for
the MPEs. Again, as theory predicts, the higher performing fine MPE has a higher
eigenvalue spectrum than the coarse grid. The right figure then compares the two
MPEs to the high frequency FFE. The fine grid has the highest spectrum overall,
while the FFE and coarse MPE flip around the middle of the plot. This could
explain why the PSNR and MS-SSIM values for the two encodings are so similar.
The coarse MPE will quickly learn finer detail than the baseline, but the FFE
will be able to learn the very fine details at the smallest eigenvalues.
Appendix A shows the same results across encodings on additional images.

\begin{figure}[t]
    \centering
    \includegraphics[width=1.0\linewidth]{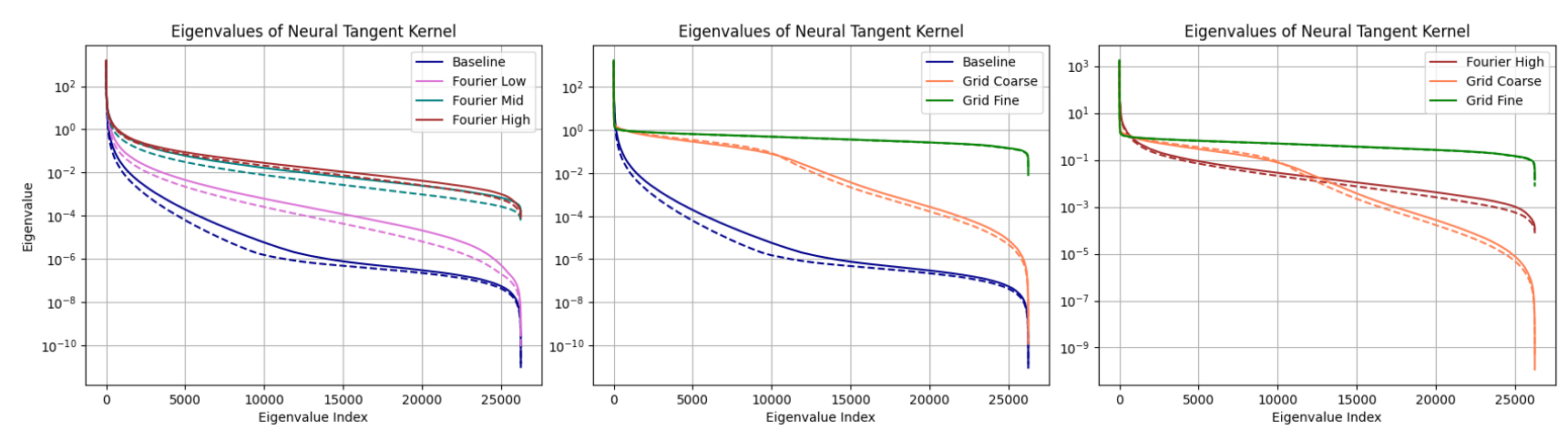}
    \caption{The NTK eigenvalue spectrum is compared for the cases found in
    Figure \ref{fig:image-results} and Table \ref{tab:scaling-params}. All
    encodings perform better than the baseline. The left plot shows the
    comparison of the baseline and the FFEs. The middle plot shows the same for
    the baseline and the MPEs. The right blot then compares the high frequency
    FFE to both MPE. The FFEs seem to saturate, while the fine grid MPE
    gives the best performance. These trends are backed by the PSNR
    values reported in the table. Interestingly, the coarse MPE crosses
    over the FFE, giving it strong performance early in training but allowing
    for higher PSNR values in the FFEs at the end of training.}
    \label{fig:eigenvalue-2d-regression}
\end{figure}

\textbf{ImageNet Analysis.} Plots of the average eigenvalue spectrum for the tuned
encodings on ImageNet are found in Figure \ref{fig:imagenet-spectra}. Again,
solid lines and dashed lines are the spectra at the end and middle of training,
respectively. As images could have different numbers of pixels, the spectra were
scaled to 8000 values. The mean was computed for each category. Tuning reflects
the use of encodings in practice, and we find that the MPE shows clear benefits
in the spectrum on a wide variety of images. The clear delineations of the
spectra also demonstrates the robustness of this evaluation on a wide class of
images and strong alignment between the performance of the network and the
spectrum of the NTK.

\textbf{OccupancyNet Analysis.}
Lastly, we plot the average eigenvalue spectrum for evaluations across three 3D meshes from the Stanford graphics dataset \cite{Turk1994ZipperedImages} in Figure \ref{fig:occupancynet-spectra}. Again, we see a clear benefit of the MPE, with the spectra raising above the other encodings. The minimum eigenvalues of the FFE and MPE are similar, but the overall spectrum for the MPE is higher. This corresponds to better training across all eigenvectors. Both encodings show a clear improvement over the baseline, as expected. Visualization of the 3D surface are in Appendix E. These results show that the theory easily extends from 2D to 3D, and that the NTK spectrum is a powerful tool in understanding the performance of encodings across domains.

\begin{figure}[t]
    \centering
    \begin{subfigure}{0.45\linewidth}
    \includegraphics[width=1.0\linewidth]{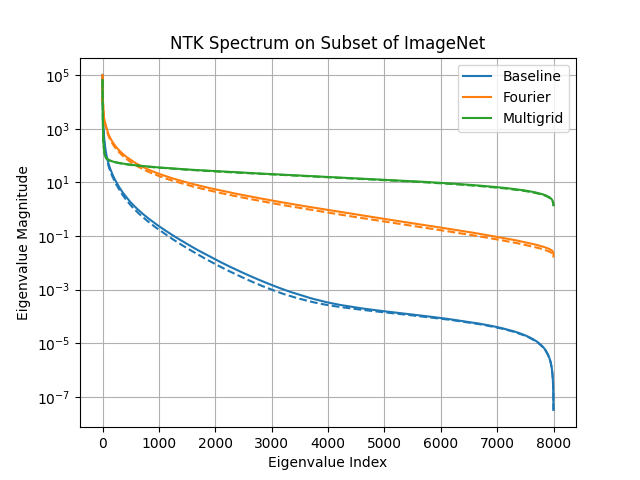}
    \caption{ImageNet}
    \label{fig:imagenet-spectra}
    \end{subfigure}
    \begin{subfigure}{0.45\linewidth}
    \includegraphics[width=1.0\linewidth]{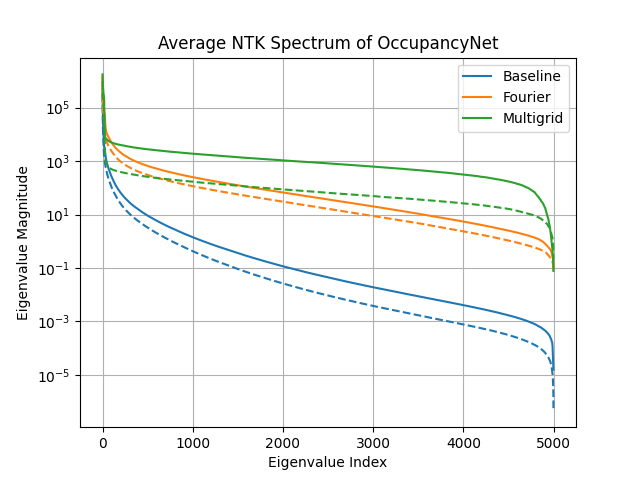}
    \caption{OccupancyNet}
    \label{fig:occupancynet-spectra}
    \end{subfigure}
    
    \caption{This figure compares the mean of the eigenvalue spectra of
    different encodings across randomly sampled images from 100 synonym sets in
    ImageNet and three 3D meshes on OccupancyNet. The dashed line shows the mean
    spectra at the midpoint of training. We find that tuned encodings have
    fairly regular performance: the MPE outperforms the FFE, which outperforms
    the baseline. This result shows that the spectrum is stable across different
    images and domains.}
    
\end{figure}

\section{Conclusion, Limitations, and Future Work}

We present an analysis of the multigrid parametric encoding compared to the
Fourier feature encoding and baseline coordinate-based MLP to understand the
MPE's strong empirical performance in many graphics applications. We follow
previous analyses on coordinate based MLPs for graphics and scientific machine
learning and leverage neural tangent kernel theory to connect the eigenvalue
spectrum with the learning of higher frequency information. We prove a lower
bound on the eigenvalues of the MPE and give strong empirical evidence for this
improvement being isolated to the learnable grid as opposed to the embedding
space. The MPE, therefore, mitigates spectral bias in a fundamentally different
way than the FFE. Our analysis, however, does not account for the influence of
different activation functions, and the space of possible problems is too large
to compute the NTK for each application. Evaluations on a 2D image regression
and 3D implicit surface problem demonstrated that the theory holds in multiple
domains, while providing easily interpreted visual results. Through metrics such
as PSNR and MS-SSIM, we showed strong alignment between the theory and empirical
results. Our work greatly improves the community's understanding of encodings
and could be leveraged in future works to better optimize them to specific
applications by adjusting the interpolation kernel or tuning other
hyperparameters. 

\newpage

\section*{Acknowledgments}

This project was supported in part by a grant from an NSF CAREER AWARD 1942230, ONR YIP award N00014-22-1-2271, ARO’s Early Career Program Award 310902-00001, Army Grant No. W911NF2120076, the NSF award CCF2212458, NSF Award No. 2229885 (NSF Institute for Trustworthy AI in Law and Society, TRAILS), a MURI grant 14262683, an award from meta 314593-00001, and an award from Capital One.

\bibliography{iclr2025_conference, references}
\bibliographystyle{iclr2025_conference}

\newpage

\appendix

\section{Training Details and Further Experiments}

\subsection{Training Details}

The PyTorch \citep{Paszke2019PyTorch:Library} library was used to implement all models and
analyses. Runs were completed on internal grid infrastructure using a single
NVIDIA RTX A5000 graphics card, 8 CPU cores, and 56 GB of system memory. The GPU
handles training and computation of the neural tangent kernel, while the CPU
handles the eigenvalue decomposition, as it was found to be faster. Each analysis
takes approximately 30 minutes to complete.

2D image regression trains the network on a single image. The pixel coordinates and red, green, and blue color values were normalized to between 0 and 1. There was no split between training and test data because, even with the complete dataset, the coordinate based MLP is not able to learn the high frequency information in the image. Training was done using stochastic gradient descent \cite{Kiefer1952StochasticFunction} with a mean squared error loss between the linear activation of the final layer and the ground truth pixel value.

To evaluate the NTK during training, slight variations are made to the network
architecture. NTK initialization \citep{Jacot2018NeuralNetworks} of the linear
layers was used in which the weights and biases are sampled from
$\mathcal{N}(0, \mathbf{I})$ and a layer, $l$, is given by
$\sigma(\frac{1}{\sqrt{n^{(l)}}}\mathbf{W}^{(l)}\mathbf{x}^{(l)} + \beta
\mathbf{b}^{(l)})$, where $n^{(l)}$ is the size of the input vector, $x^{(l)}$,
to that layer. Though this parameterization has the same approximation power
as the standard MLP, the additional factor greatly reduces the gradients
during backpropagation, requiring us to set the learning rate to 100 on some experiments. 

\subsection{Dog Image}

\begin{figure}[h]
    \centering
    \includegraphics[width=0.9\linewidth]{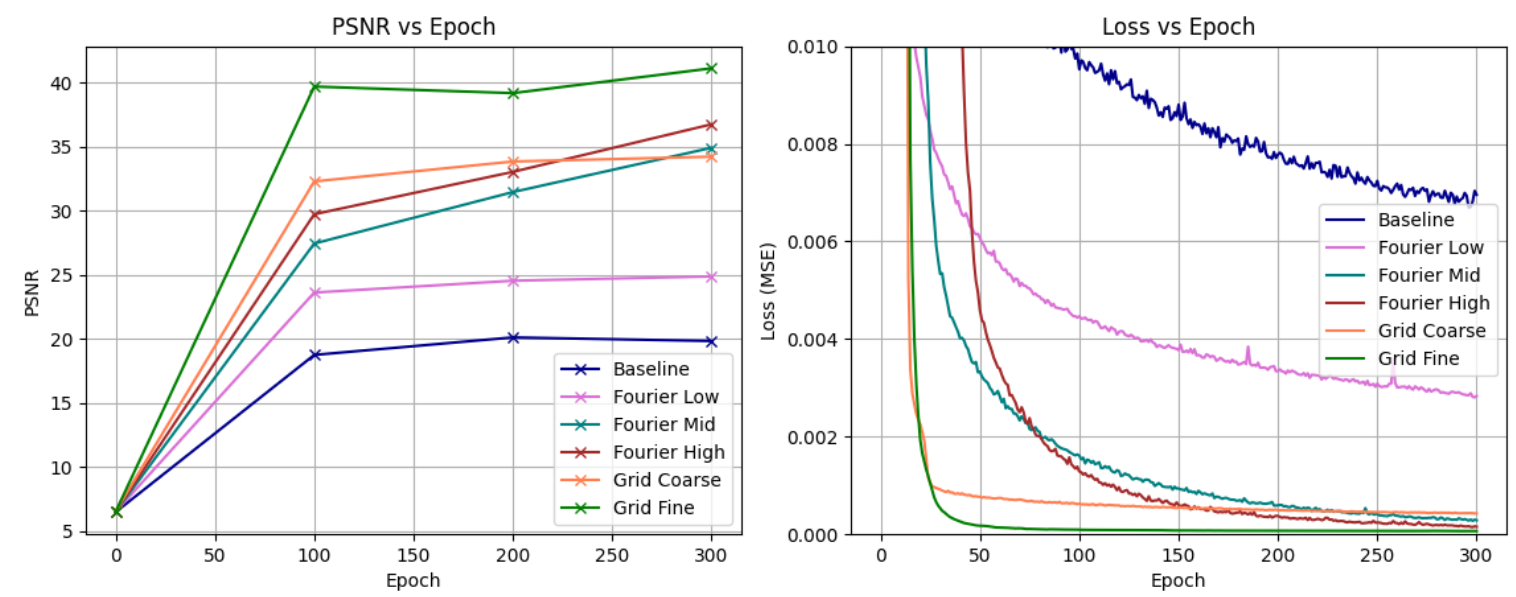}
    \caption{We report the peak signal-to-noise ratio (PSNR) and mean squared
    error (MSE) for the image regression problem shown in Figure
    \ref{fig:image-results}. We compare a baseline network with no encoding, 3
    Fourier feature encoding (FFE), and 2 multigrid parametric encoding (MPE).
    These results parallel what was seen in the regressed images. The MSE is
    highest and PSNR is lowest for the baseline encoding corresponding to the
    lowest quality image. As frequencies increase in the FFE, we see the MSE
    lower and the PSNR increase. The fine MPE outperforms even the highest
    frequency FFE by 5 dB PSNR, while the coarse MPE is on par with the higher frequency
    FFEs. We also note that the MPE lowers the error at a faster rate than the
    FFE, with loss dropping below 1e-3 125 epochs sooner.}
    \label{fig:psnr-loss}
\end{figure}

Figure \ref{fig:psnr-loss} compares the MSE loss and PSNR across encodings. As
expected, the baseline network has the highest training error and the lowest
PSNR. We find that these plots follow the qualitative results from Figure
\ref{fig:image-results}. The low frequency encoding halves MSE and increases the
PSNR by 5 dB. The mid and high frequency encodings show similar results in both
the regressed image and MSE and PSNR. The higher frequency encodings halve the
loss again and increase the PSNR by 10 dB as compared to the low frequency
encoding. The coarse multigrid encoding results in similar MSE and PSNR compared
to the mid and high frequency encodings; however, it trains much faster with the
loss dropping below 1e-3 125 epochs sooner than the mid and high frequency
encodings. The best performing encoding is then the fine multigrid encoding with
a PSNR another 5 dB higher than the coarse MPE, mid PPE, and high PPE. Loss is
also the lowest for the fine MPE. From these results, NTK theory would tell us
that the kernel's eigenvalues would follow the same trend.

\subsection{Windmill and Lake Images}

We include results on two other images for the 2D image regression problem
across encodings. We show that the same results from the main paper hold across
multiple data sets. We include both a natural lake scene and a
combination man made and natural scene featuring a windmill and flowerbed. Both
a qualitative look at the regressed images and MSE and PSNR show the same
trends. The base network is unable to learn fine details in the image. The three
FFEs show improving results with increasing frequency. The fine grid encoding
then has the best performance with the lowest training error, the largest PSNR,
and the highest eigenvalue spectrum.

\begin{figure}[h]
    \centering
    \includegraphics[width=0.80\linewidth]{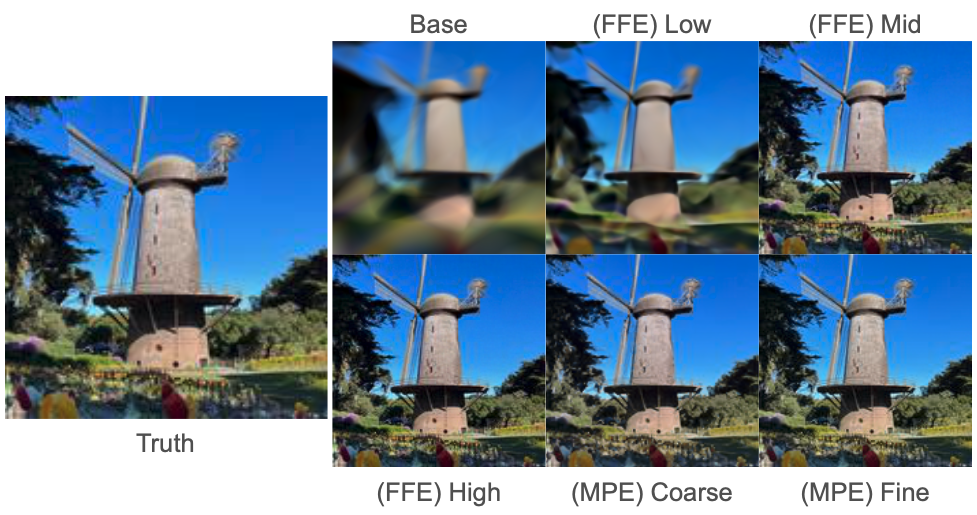}
    \caption{Results for the 2D image problem on an image of a windmill in a
    natural setting. See Section 5 for details on training and the parameters of
    each encoding. With no encoding (top left), the regressed image is blurry
    with no fine, high frequency details. The low frequency (top middle)
    encoding learns slightly more detail, but still results in a blocky image.
    The mid frequency encoding (top right), begins to show strong agreement with
    the ground truth image. The high frequency (bottom left), coarse grid
    (bottom middle), and fine grid (bottom right) encodings show even stronger
    agreement with the ground truth image, to the point where it is difficult to
    tell the difference with the human eye.}
    \label{fig:windmill-results}
\end{figure}

\begin{figure}[h]
    \centering
    \includegraphics[width=0.85\linewidth]{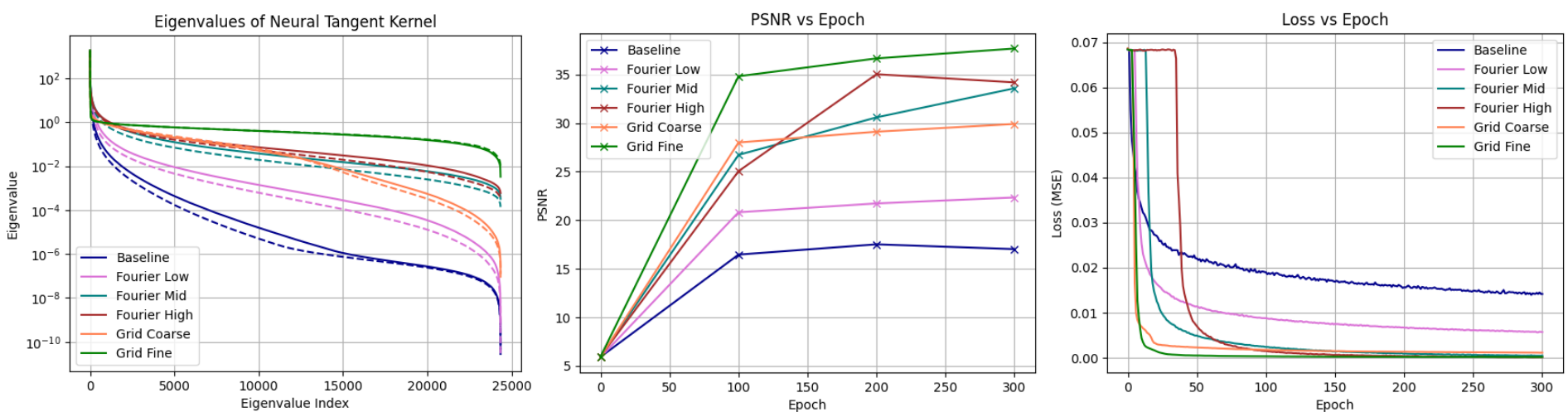}
    \caption{We plot the eigenvalue spectrum (left) at epoch 150 (dashed line)
    and epoch 300 (solid) line, the PSNR throughout training (middle), and the
    loss curves (right) to compare the performance of the encodings
    quantitatively on the windmill image. Similarly to the main paper, we find
    that the strongest performing encodings, highest PSNR and lowest training
    loss, has the highest magnitude eigenvalues spectrum. Lower eigenvalue
    spectra then correspond to lower PSNR and higher loss throughout
    training.}
    \label{fig:windmill-plots}
\end{figure}

\begin{figure}[h]
    \centering
    \includegraphics[width=0.80\linewidth]{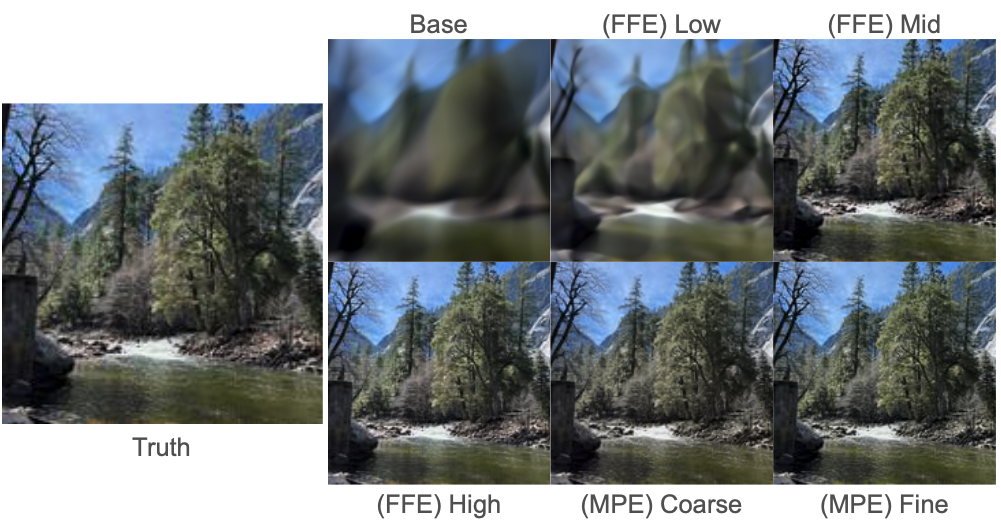}
    \caption{Results for the 2D image problem on an image of a lake in a
    natural setting. See Section 5 for details on training and the parameters of
    each encoding. With no encoding (top left), the regressed image is blurry
    with no fine, high frequency details. The low frequency (top middle)
    encoding learns slightly more detail, but still results in a blocky image.
    The mid-frequency encoding (top right), begins to show strong agreement with
    the ground truth image. The high frequency (bottom left), coarse grid
    (bottom middle), and fine grid (bottom right) encodings show even stronger
    agreement with the ground truth image, to the point where it is difficult to
    tell the difference with the human eye.}
    \label{fig:lake-results}
\end{figure}

\begin{figure}[h]
    \centering
    \includegraphics[width=0.85\linewidth]{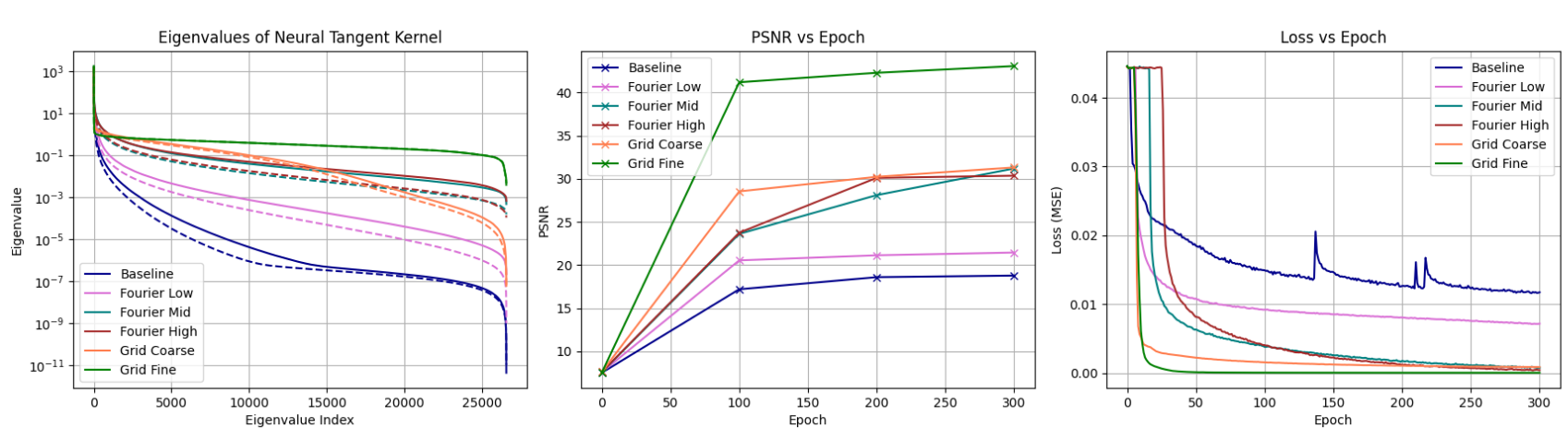}
    \caption{We plot the eigenvalue spectrum (left) at epoch 150 (dashed line)
    and epoch 300 (solid) line, the PSNR throughout training (middle), and the
    loss curves (right) to compare the performance of the encodings
    quantitatively on the lake image. Similarly to the main paper, we find
    that the strongest performing encodings, highest PSNR and lowest training
    loss, has the highest magnitude eigenvalues spectrum. Lower eigenvalue
    spectra then correspond to lower PSNR and higher loss throughout
    training.}
    \label{fig:lake-plots}
\end{figure}

\newpage

\section{Change of NTK During Training}

When plotting the eigenvalue spectrum, we only show results at epoch 150 and
epoch 300. In Figure \ref{fig:ntk-spectrum-start} we show the results for epoch
0 and epoch 300. As shown, the spectrum at the start of training is very
different from the spectrum at the end of training. We also note that
eigenvalues across encodings all seem to have similar values. NTK theory says
that the kernel is stable in the infinite width limit
\citep{Jacot2018NeuralNetworks}, and a width of 512 should be large enough to
see this effect. So why does that not hold in this case? Recent works by
\citet{Wang2021OnNetworks, Wang2020WhenPerspective} have shown that the
eigenvalue spectrum does, in fact, change during training in real world
scenarios. The reason given is that the initial initialization is too far from
the optimal parameter values, so large changes in weights are still necessary
for learning. This is most likely exacerbated by the fact our inputs are scaled
to the unit interval and not a unit normal distribution. This breaks the lazy
training assumption for the infinite width limit of the kernel; however, we are
focused on the finite width kernel and the training dynamics given different
encodings. Furthermore, the spectral bias analysis using the eigenvalues of the
NTK still holds. This can be seen by considering the Taylor expansion about the
optimal weights. Given slight perturbations, we expect the kernel to be
elatively stable in this region, and its eigenvalues allow us to conclude with
eigenvectors will be fit faster during training. This is reflected in the fact
that the kernel spectrum is stable after the start of training, as shown in the
main Figures.

\begin{figure}[ht]
    \centering
    \includegraphics[width=0.60\linewidth]{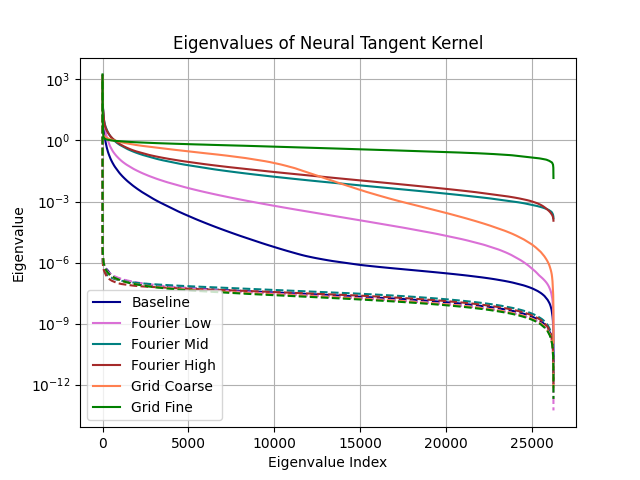}
    \caption{We plot the eigenvalues at the start of training (dotted line) and at epoch 300 (solid line). We 
    see that the spectrum is very low and close between encodings at the start of training, but raises and 
    separates by the end of training. Though this appears to break the lazy training observation
    of the infinite width kernel \citep{Jacot2018NeuralNetworks}, we are concerned with the finite
    width kernel and the spectral bias analysis still holds as long as we consider small regions about
    the weights at a time. As show in Figure \ref{fig:eigenvalue-2d-regression}, the kernel is
    relatively constant for the second half of training.}
    \label{fig:ntk-spectrum-start}
\end{figure}

\section{Visualizing Multigrid Parameters}

Though we have shown that the multigrid encoding is able to raise the eigenvalue spectrum
and expressiveness of a coordinate based MLP, it is also interesting to look at what
the parameters in the grid are learning. In Figure \ref{fig:multigrid-grayscale} we
plot the parameters as a gray scale image for both the coarse and fine encodings. Each
encoding contains two learnable scalars at each grid point. The $0^{th}$ parameter is shown
on the left while the $1^{st}$ is plotted on the right. We see that the grid, in fact, begins
to learn the image, while the backing MLP adjusts the color and helps interpolate. We also 
see small gaps in the fine encoding, most likely due to grid cells that were unused during training.
This shows that it's easy to waste space, especially with uniform sampling. In these cases,
the sparse and hash grid encodings could help reduce the memory footprint without changing
the analyses found in the main paper. 

\begin{figure}[ht]
    \centering
    \includegraphics[width=0.65\linewidth]{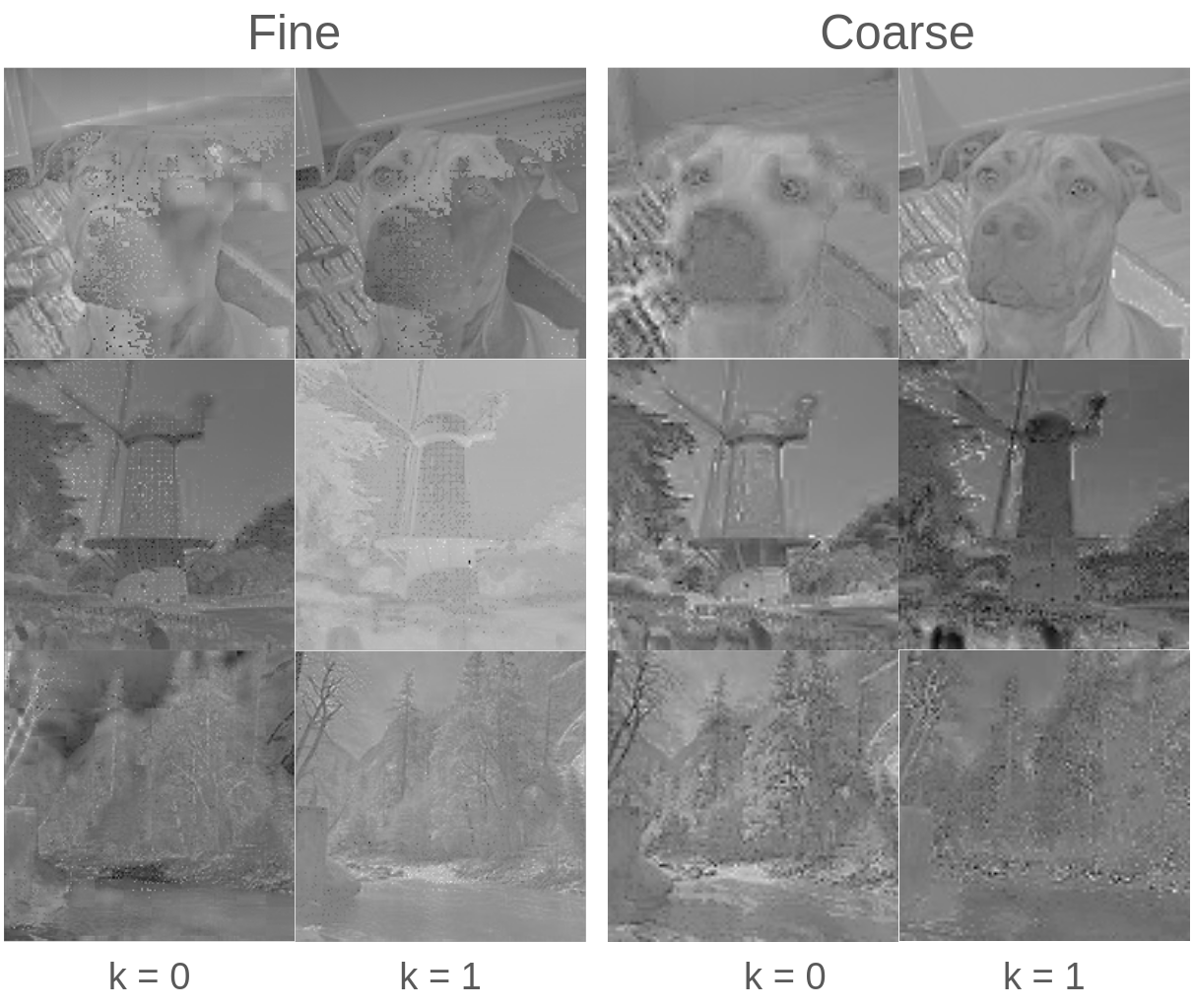}
    \caption{This figure shows the learned scalars at grid points in the MPE as a gray scale image.
    This allows us to get greater insight into what the encoding is doing. We plot results for both the
    fine grid encoding (left) and the coarse grid encoding (right). Each encoding has a single layer and
    two learnable parameters at each node. The $0^th$ parameter is shown in the left column and the $1^{st}$
    parameter is in the right column. We find that the encoding is learning representations similar
    to the regressed image. We also note that the fine grid encoding has small white patches in the middle
    most likely from where no pixel coordinate landed in the ground truth image.}
    \label{fig:multigrid-grayscale}
\end{figure}

If the multigrid encoding is learning the image, then does the MLP need to do
less work? Put another way, can we reduce the size of the MLP to save memory and
computation. To investigate this question, we plot the activation regions
\citep{Hanin2019DeepPatterns} in Figure \ref{fig:activation-regions}. Activation
regions encode which neurons in the MLP cause the ReLU activation function to
take on a value of 1. This produces a binary encoding that can be used to define
an activation pattern, $\mathcal{A}$, which is a vector in $\{0,
1\}^{\#neurons}$. Each unique activation pattern then defines an activation
region by

\begin{equation}
    \mathcal{R}(\mathcal{A}, \theta) =  \{ x \in \mathbb{R}^{\dim(x)} \; | \; \chi_+(\text{ReLU}(f^l(x_l))) = a_l, \; \forall l, \ldots, k, \; \forall a_l \in \mathcal{A} \}.
\end{equation}

Plotting the activation region for each pixel then shows the different sets of 
activations present in the network. The closer the image looks to noise, the 
more activation patterns are present. In Figure \ref{fig:activation-regions}
we show both the activation regions across the unit square and the
total number of activation regions present across the different encodings. 
We find that the network has similar activation for both the MPEs and higher
frequency FFEs. As such, we believe that the network is still playing a large
role in the architecture, and we should be cautious about reducing its capacity.

\begin{figure}[ht]
    \centering
    \includegraphics[width=0.75\linewidth]{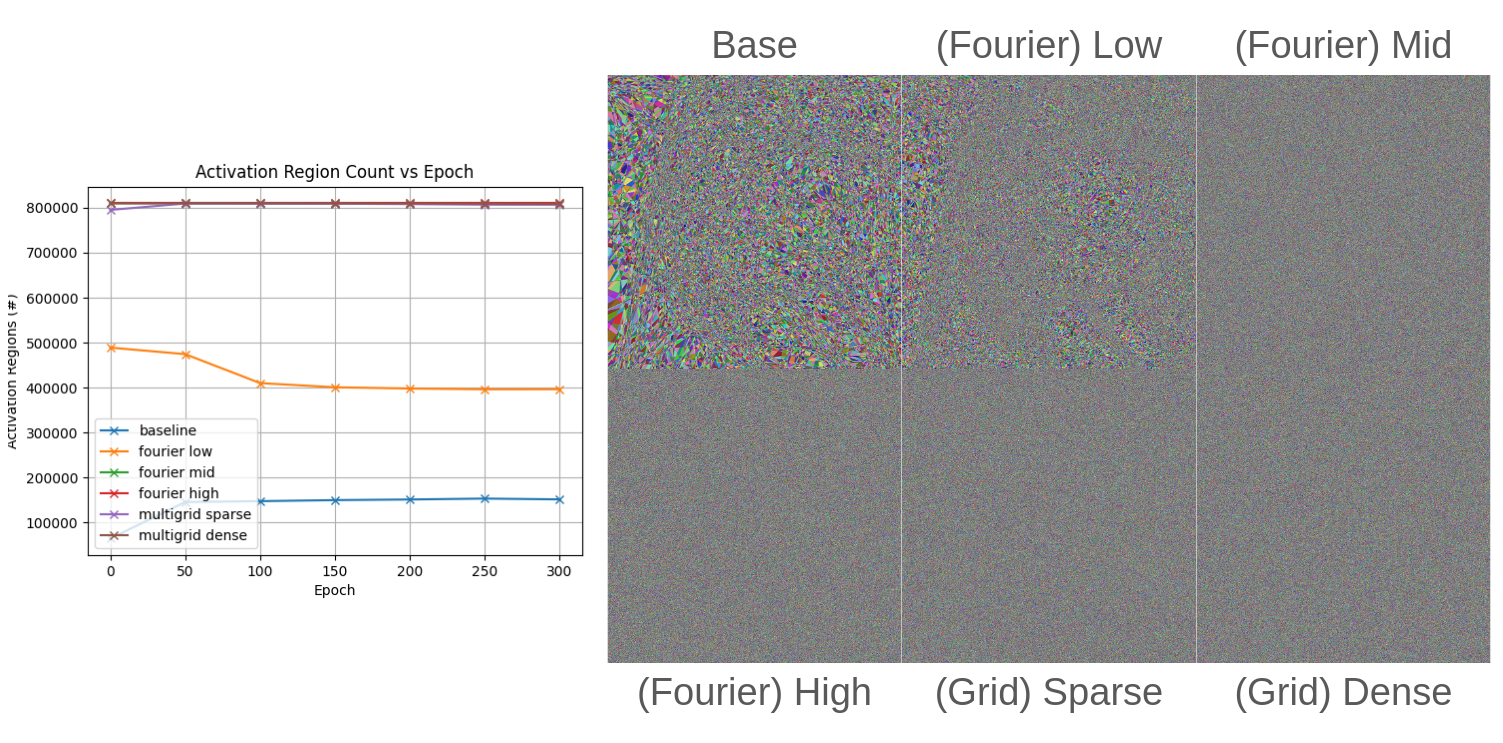}
    \caption{In this figure we plot the total number of activation regions found
    in the network across encodings as well as the activation region plotted per
    pixel of the regressed image. Note that the multigrid parameters are
    slightly different in this instance, with the sparse encoding having 8 layers
    with 2 parameters at each grid cell, while the dense encoding has 24 layers
    with 2 parameters at each grid cell. We find that even though the grid
    encoding seems to be learning the image (Figure
    \ref{fig:multigrid-grayscale}, the baking network still has
    increased activation.}
    \label{fig:activation-regions}
\end{figure}

\section{Hyperparameters}

This section lists the hyperparameters used by encodings in each experiment. Table \ref{tab:tuning-results} shows the hyperparameters used in the ImageNet evaluation. These parameters were found using Optuna \cite{Akiba2019Optuna:Framework} to reflect real world training. The tuner was set to maximize the PSNR value between the network output and ground truth. Table \ref{tab:scaling-params} gives the hand selected parameters that were used to investigate how adjusting the parameters affects the output. We see that increasing the FFE's frequency improves the PSNR, but there are diminishing returns, as there is a smaller difference between Mid and High than there is between Low and Mid. The MPE, in contrast, works well for both the coarse and fine configurations. Lastly, Table \ref{tab:implicit-3d-params} gives the Optuna found encoding parameters for the 3D implicit surface regression. The tuner sought to minimize the training loss. Results for the implicit surface evaluation can be found in Appendix E.

\begin{table}[t]
    \centering
    \begin{tabular}{c|cccc|c}
        & \multicolumn{4}{|c|}{Hyperparameters} & Results \\
        Encoding & Learning Rate & Epochs & Batch Size & Grid Parameters & PSNR \\
        \hline
        Multigrid & 0.3932 & 100 & 10 & $k=3$, $L=2$, $x^\pm=[96,277]$ & 45.28 \\
        Fourier & 0.3865 & 100 & 92 & $L=6$ & 33.47 \\
        Baseline & 0.2394 & 100 & 10 & N/A & 29.94 \\
    \end{tabular}
    \caption{This table shows the results of hyperparameter tuning using the Optuna library. 30 trials were conducted where the tuner was able to select values
    for learning rate, batch size, and encoding specific parameters. The tuner sought to maximize the peak signal-to-noise ratio (PSNR) between the regressed and true image. The multigrid parametric encoding (MPE) is able to maintain a large lead in PSNR over the Fourier feature encoding (FFE) and baseline. These parameters were used in a larger sweep to characterize how the encodings behave across a wide class of images.}
    \label{tab:tuning-results}
\end{table}

\begin{table}[t]
    \centering
    \begin{tabular}{c|cccc|c}
        & \multicolumn{4}{|c|}{Hyperparameters} & Results \\
        Encoding & Learning Rate & Epochs & Batch Size & Grid Parameters & PSNR \\
        \hline
        Low (FFE) & 100 & 300 & 32 & $L=4$ & 25 \\
        Mid (FFE) & 100 & 300 & 32 & $L=8$ & 35 \\
        High (FFE) & 100 & 300 & 32 & $L=16$ & 36.5 \\
        Coarse (MPE) & 100 & 300 & 32 & $k=2, L=1, x=100$ & 34.5 \\
        Fine (MPE) & 100 & 300 & 32 & $k=2, L=1, x=200$ & 41.5 \\
        Baseline & 100 & 300 & 32 & N/A & 20 \\
    \end{tabular}
    \caption{This table shows the parameters used in scaling experiments
    on the different encodings. We compare 3 FFEs of increasing frequency
    and 2 MPEs with fine or coarse grids (e.g., greater than or less than the
    number of pixels in the image). The PSNR column reports the regressed image
    against truth at the end of training. For details on training, please see
    \textit{Experimental Setup}.}
    \label{tab:scaling-params}
\end{table}

\begin{table}[ht]
    \centering
    \begin{tabular}{c|cccc}
        & \multicolumn{4}{|c}{Hyperparameters} \\
        Encoding & Learning Rate & Epochs & Batch Size & Grid Parameters \\
        \hline
        Armadillo (Base) & 0.92224 & 4000 & 13187 & N/A \\
        Armadillo (FFE) & 0.78930 & 4000 & 9799 & $L=7$ \\
        Armadillo (MPE) & 0.99469 & 4000 & 10903 & $k=1,L=1,x=44$ \\
        Buddha (Base) & 0.92224 & 4000 & 13187 & N/A \\
        Buddha (FFE) & 0.78930 & 4000 & 9799 & $L=7$ \\
        Buddha (MPE) & 0.78445 & 4000 & 9267 & $k=2, L=2,x^\pm=[38,102]$ \\
        Dragon (Base) & 0.92224 & 4000 & 13187 & N/A \\
        Dragon (FFE) & 0.78930 & 4000 & 9799 & $L=7$ \\
        Dragon (MPE) & 0.80905 & 4000 & 9989 & $k=2,L=2,x^{\pm}=[33,136]$ \\
    \end{tabular}
    \caption{This table provides encoding parameters used in the evaluation of 3D implicit surfaces from meshes (see Appendix E). Parameters were found using Optuna minimizing the training loss. Grids were restricted to no more than 3 layers deep and 3 trainable parameters per node.}
    \label{tab:implicit-3d-params}
\end{table}

\section{3D Implicit Surface}

The NTK spectrum was evaluated on 3D implicit surface regression to demonstrate that the theory holds in 3D problems as well. The implicit surface was training using an 8 layer MLP with 256 neurons pre hidden layer. The final layer was passed through a sigmoid and then loss was evaluated by binary cross entropy. The input mesh was scaled to normalize its longest axis between 0 and 1. Points were randomly sampled each epoch in the scaled mesh's bounding box and input to the network. An output of 0 meant the point was outside the mesh, and a 1 meant it was inside. To visualize the surface, a ray marching method was used to render the 0.5 level set of the function, which is expected to represent the surface of the mesh. Again, there is no distinct separating between training and test data, and optimization was done using stochastic gradient descent \cite{Kiefer1952StochasticFunction}.

As expected, we see that the MPE outperforms the FFE and baseline in terms of fine detail. The baseline network fails to even learn a recognizable structure in all cases. The FFE is able to represent some structure, but struggles with finer detail. The corresponding average eigenvalue spectrum was plotted in the main paper in Figure \ref{fig:occupancynet-spectra}. The hyperparameters used in each network can be found in Table \ref{tab:implicit-3d-params}.

\begin{figure}
    \centering
    \includegraphics[width=0.75\linewidth]{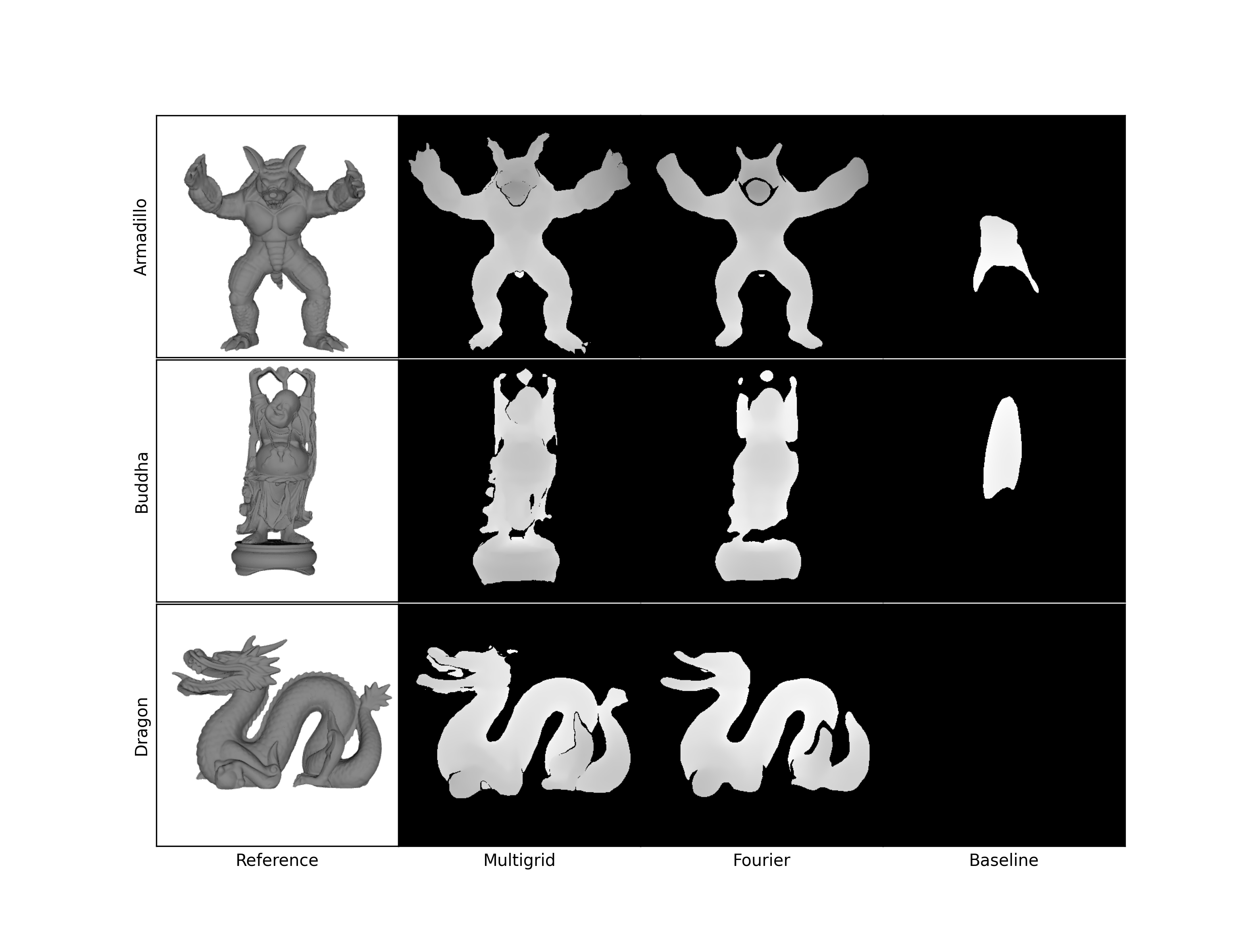}
    \caption{This figure plots the depth field at the 0.5 level set of the learned implicit surface. This rendering shows the network's ability to capture the structure of the input mesh. We find that the MPE outperforms the FFE and baseline in all scenarios, with baseline struggling to capture any recognizable features of the original mesh.}
    \label{fig:3d-implicit-surfaces}
\end{figure}

\end{document}